\newtheorem{assumption}{Assumption}
\newtheorem{theorem}{Theorem}
\newtheorem{definition}{Definition}
\newtheorem{proposition}{Proposition}
\DeclareMathOperator*{\argmin}{arg\,min}
\DeclareMathOperator{\sign}{sign}
\begin{document}

\title{Gradient Sparsification Can Improve Performance of Differentially-Private Convex Machine Learning}

\author{Farhad Farokhi\thanks{F. Farokhi is with the Department of Electrical and Electronic Engineering, The University of Melbourne, Australia. e-mail: ffarokhi@unimelb.edu.au}}

\date{}

\maketitle

\begin{abstract}
We use gradient sparsification to reduce the adverse effect of differential privacy noise on performance of private machine learning models. To this aim, we employ compressed sensing and additive Laplace noise to evaluate differentially-private gradients.  Noisy privacy-preserving gradients are used to perform stochastic gradient descent for training machine learning models. Sparsification, achieved by setting the smallest gradient entries to zero, can reduce the convergence speed of the training algorithm. However, by sparsification and compressed sensing, the dimension of communicated gradient and the magnitude of additive noise can be reduced. The interplay between these effects determines whether gradient sparsification improves the performance of differentially-private machine learning models. We investigate this analytically in the paper. We prove that, for small privacy budgets, compression can improve performance of privacy-preserving machine learning models. However, for large privacy budgets, compression does not necessarily improve the performance. Intuitively, this is because the effect of privacy-preserving noise is minimal in large privacy budget regime and thus improvements from gradient sparsification cannot compensate for its slower convergence. 
\end{abstract}

\section{Introduction}
Recent advances in machine learning have promoted  the development of wide-ranging applications in natural language processing~\cite{wu2016google}, healthcare~\cite{jiang2017artificial}, medical research~\cite{kourou2015machine}, and cyber-security~\cite{dua2016data}. Technology giants, such as Amazon\footnote{ \url{https://aws.amazon.com/machine-learning/}}, Google\footnote{\url{https://ai.google/tools/}}, and Microsoft\footnote{\url{https://www.microsoft.com/en-us/ai/}}, offer commercial machine learning platforms. However, government regulations increasingly prohibit sharing data without consent~\cite{bennett2018revisiting}. This has motivated a surge of papers offering privacy-preserving machine learning~\cite{abadi2016deep,wu2020value,21mcsherry2009differentially, zhang2016differential,zhang2017dynamic}.

Differential privacy is a strong contender in privacy analysis and preservation~\cite{dwork2014algorithmic}. In machine learning, additive noise can be used to ensure differential privacy~\cite{abadi2016deep,wei2020federated,bonawitz2017practical}. Analysis of these algorithms often reveals that the performance degradation caused by differential privacy noise is proportional to the learning horizon (i.e., maximum number of iterations of the learning algorithm) and the number of parameters. This is because, when composing differentially-private mechanism~\cite{dwork2010boosting}, privacy budgets add up. Therefore, to keep the information leakage low, the per iteration privacy budget must be reduced inversely proportional to the learning horizon and the number of parameters. This is troubling specially in deep learning applications as the number of parameters can range in millions (depending on the number of layers and features) and the learning horizon must be large enough for the gradient information to propagate deep into the network. 

This has motivated research avenues for reducing magnitude of the privacy-preserving noise. The most studied direction is the use of moment accountant for developing less conservative compositions results in differential privacy~\cite{dwork2010boosting} and differentially-private machine learning~\cite{abadi2016deep}. These composition bounds stem from the relationship between R\'{e}nyi differential privacy~\cite{mironov2017renyi} and approximate differential privacy~\cite{dwork2006our}. The improved composition bounds improve the performance degradation by a constant factor~\cite{bassily2014private}.  Note that such an improvement can have significant impact in practice, motivating many researchers to focus on improving the composition bounds~\cite{asoodeh2020better}. Another less traveled path is gradient compression or sparsification~\cite{khirirat2018gradient}. This approach has recently\footnote{This paper was developed in parallel with/independently from~\cite{Kerkouche2020Compression}. We only became aware of the contributions of~\cite{Kerkouche2020Compression} while finalizing this paper (after independently proposing compressed sensing and differential privacy for training machine learning models). Nonetheless this is a part of research and we decided to cite that paper as they appeared online earlier. That being said, the positive empirical results in~\cite{Kerkouche2020Compression} did not motivate this paper. The motivation for this study stems from the theoretical analysis in~\cite{khirirat2018gradient}. As described in detail in Section~\ref{sec:related_work}, there are significant differences between this paper and~\cite{Kerkouche2020Compression} in addition to similarities, such as compressed sensing. } shown promises in differentially-private federated learning~\cite{Kerkouche2020Compression}. This is however observed purely experimentally/empirically. It is not known, in general, when compression or sparsification improves performance of differentially-private machine learning models. This is the topic of the current paper. 

In this paper, we use gradient sparsification (also known as compression) to improve performance of differentially-private machine learning models. We use a greedy sparsification algorithm and select $\kappa$ largest entries of the gradient at each iteration. We then use  compressed sensing and additive Laplace noise to develop noisy differentially-private gradients. Doing so, we can show that the magnitude\footnote{$\mathcal{O}(\cdot)$ denotes the Bachmann--Landau or asymptotic notation.} of additive noise is $\mathcal{O}(\kappa^2\log(n_\theta/\kappa))$, where $n_\theta$ is the number of the parameters in the machine learning model. If we did not sparsify the gradient and did not use compressed sensing, the magnitude of the noise would have been $\mathcal{O}(n_\theta)$. This is a considerable improvement if $\kappa/n_\theta \ll 1$ and $n_\theta\gg 1$. We then use the noisy gradient to perform stochastic gradient descent for training the machine learning model. Note that sparsification can reduce the convergence speed of the algorithm and therefore more iterations are required~\cite{khirirat2018gradient}. This can increase the magnitude of the privacy-preserving noise. On the other hand, by sparsification and compressed sensing, we can reduce the dimension of the communicated gradient to reduce the magnitude of the noise. The interplay between these two contracting effects determines whether gradient sparsification is useful for improving the performance of differentially-private machine learning models. 

\begin{theorem}[Informal] For large enough training horizon $T$, under appropriate assumptions (i.e., convexity, boundedness, and differentiablity), the performance of differentially-private machine learning model trained using the described stochastic gradient descent is bounded by 
\begin{align*}
\mathbb{E}\{f_{\mathcal{D}}(\theta[T])\}-f_{\mathcal{D}}(\theta^*)
\leq &
  \mathcal{O}(\sqrt{n_\theta-\kappa})
+\mathcal{O}( \kappa^2 \log(n_\theta/\kappa)\epsilon^{-1}),
\end{align*}
where $f_{\mathcal{D}}(\cdot)$ is the loss function, $\theta[T]$ is the differentially-private machine learning parameter, $\theta^*$ is the potentially privacy-intrusive, yet optimal machine learning parameter, $\epsilon$ is the differential privacy budget, and expectation is taken with respect to the differential privacy noise.
\end{theorem}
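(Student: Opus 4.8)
The plan is to run the textbook convex stochastic-gradient analysis with the compressed, sparsified, Laplace-noised oracle in place of an exact gradient, while bookkeeping three distinct error sources: the deterministic bias from retaining only $\kappa$ coordinates, the perturbation introduced by compressed-sensing recovery of a noisy measurement vector, and the usual second-moment term. First I would write the oracle explicitly as $g[t]=\mathcal{R}\bigl(A\,s_\kappa(\nabla f_{\mathcal{D}}(\theta[t]))+w[t]\bigr)$, where $s_\kappa(\cdot)$ keeps only the $\kappa$ largest-magnitude entries, $A\in\mathbb{R}^{m\times n_\theta}$ with $m=\mathcal{O}(\kappa\log(n_\theta/\kappa))$ is a suitably normalized sensing matrix satisfying a restricted isometry property at sparsity $2\kappa$, $w[t]$ is i.i.d.\ Laplace noise calibrated to a per-iteration budget $\epsilon_t$, and $\mathcal{R}$ is the $\ell_1$-minimization decoder. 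I then decompose $g[t]=\nabla f_{\mathcal{D}}(\theta[t])+b[t]+\xi[t]$, with $b[t]:=s_\kappa(\nabla f_{\mathcal{D}}(\theta[t]))-\nabla f_{\mathcal{D}}(\theta[t])$ the sparsification bias and $\xi[t]:=\mathcal{R}(A\,s_\kappa(\nabla f_{\mathcal{D}}(\theta[t]))+w[t])-s_\kappa(\nabla f_{\mathcal{D}}(\theta[t]))$ the recovery perturbation.

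Second, I would expand $\|\theta[t+1]-\theta^*\|^2$, use convexity to lower bound $\langle\nabla f_{\mathcal{D}}(\theta[t]),\theta[t]-\theta^*\rangle\geq f_{\mathcal{D}}(\theta[t])-f_{\mathcal{D}}(\theta^*)$, control the remaining inner products by Cauchy--Schwarz together with the boundedness assumptions $\|\theta[t]-\theta^*\|\leq D$ and $\|\nabla f_{\mathcal{D}}(\theta[t])\|_\infty\leq G_\infty$, telescope over $t=0,\dots,T-1$, and divide by $\sum_t\eta_t$. Taking expectations over the privacy noise yields a bound of the form
\begin{align*}
\mathbb{E}\{f_{\mathcal{D}}(\theta[T])\}-f_{\mathcal{D}}(\theta^*)
\leq
\frac{D^2}{2\sum_t\eta_t}
+\frac{D\sum_t\eta_t\,\|b[t]\|}{\sum_t\eta_t}
+\frac{\sum_t\eta_t^2\,\mathbb{E}\|g[t]\|^2}{2\sum_t\eta_t}
\end{align*}
for the iterate average, with a last-iterate statement following by the standard argument on a bounded domain. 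Since $b[t]$ is supported on the $n_\theta-\kappa$ smallest coordinates and each such entry is bounded by $G_\infty$, we get $\|b[t]\|\leq G_\infty\sqrt{n_\theta-\kappa}$, which is exactly the first term in the claimed bound.

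Third comes the privacy accounting. Changing one record perturbs $\nabla f_{\mathcal{D}}$ boundedly by the Lipschitz/bounded-data assumption, hence perturbs $s_\kappa(\nabla f_{\mathcal{D}})$ by an at-most-$2\kappa$-sparse vector of bounded entries, so the measurement map $v\mapsto A\,s_\kappa(v)$ has $\ell_1$-sensitivity $\mathcal{O}(\kappa\sqrt{m})$ for normalized $A$; calibrating Laplace noise gives scale $\mathcal{O}(\kappa\sqrt{m}/\epsilon_t)$ and $\mathbb{E}\|w[t]\|_2^2=\mathcal{O}(\kappa^2 m^2/\epsilon_t^2)$. Because $s_\kappa(\nabla f_{\mathcal{D}}(\theta[t]))$ is \emph{exactly} $\kappa$-sparse, the restricted-isometry stability of the decoder gives $\|\xi[t]\|_2\leq C\|w[t]\|_2$, so $\mathbb{E}\|\xi[t]\|_2^2=\mathcal{O}(\kappa^2 m^2/\epsilon_t^2)$ and hence $\mathbb{E}\|g[t]\|^2=\mathcal{O}(G^2+\kappa^2 m^2/\epsilon_t^2)$. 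Advanced composition of the $T$ per-step mechanisms forces $\epsilon_t$ on the order of $\epsilon/\sqrt{T}$ (up to a $\log(1/\delta)$ factor hidden by the informal statement), so $\mathbb{E}\|g[t]\|^2=\mathcal{O}(G^2+T\kappa^2 m^2/\epsilon^2)$. Choosing $\eta_t$ to balance $D^2/(2\sum_t\eta_t)$ against $\tfrac12\sum_t\eta_t^2\mathbb{E}\|g[t]\|^2$ then makes the initial-condition term lower order and the exact-gradient $G^2$ contribution dominated by the noise term as $T\to\infty$, leaving a residual of order $D\,\kappa m/\epsilon=\mathcal{O}(\kappa^2\log(n_\theta/\kappa)\,\epsilon^{-1})$, the second claimed term; combining the two gives the theorem.

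I expect the main obstacle to be the nonlinearity of the $\ell_1$-recovery map: even though the Laplace noise is zero-mean, $\mathbb{E}\{\xi[t]\}\neq 0$ in general, so one cannot simply discard the $\langle\xi[t],\theta[t]-\theta^*\rangle$ cross-term, and a naive Cauchy--Schwarz bound on it would reintroduce an uncancelled $\sqrt{T}$, since $\mathbb{E}\|\xi[t]\|$ is on the order of $\kappa m\sqrt{T}/\epsilon$ after composition. The fix is to argue through the restricted-isometry error bound that the recovery error is small relative to the measurement noise and to fold its expectation into the sparsification-bias budget (or to verify it is dominated by it), so that only the genuinely fluctuating part of $\xi[t]$ feeds the second-moment term. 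The remaining subtlety is purely bookkeeping: one must confirm that it is \emph{advanced}, not basic, composition that makes the $T$-dependence cancel in the surviving bound, which is why the precise statement is an $(\epsilon,\delta)$-guarantee whose $\delta$ the informal theorem suppresses.
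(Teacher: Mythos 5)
Your skeleton matches the paper's: decompose the oracle into sparsification bias plus recovery/noise error, run the convex SGD recursion with bounded domain, and bound the two error sources separately. Your treatment of the first term is fine and in fact slightly more direct than the paper's (each dropped coordinate is among the smallest and bounded by $\Xi$, so $\|b[t]\|_2\le\Xi\sqrt{n_\theta-\kappa}$; the paper gets $\sqrt{2(n_\theta-\kappa)}\,\Xi$ via Lemma~1 of Khirirat et al.), and your sensitivity/recovery bookkeeping reproduces the paper's per-step error magnitude $\mathcal{O}(\Xi\kappa^2\log(n_\theta/\kappa)/\epsilon)$ of Proposition~\ref{prop:bound_error_v}. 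The last-iterate statement also needs the Shamir--Zhang suffix-averaging argument (source of the paper's $2+\log T$ factor), but that is standard, as you say.

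The genuine gap is exactly the obstacle you name and then wave at. The paper does \emph{not} compose over iterations: the $\epsilon$ in the theorem is the per-invocation budget of Algorithm~\ref{alg:1}, so the per-step recovery error $w[k]$ has $T$-independent second moment $W^2$, and the cross term needs no unbiasedness at all --- it is bounded as a bias, $\mathbb{E}\{\langle w[k],\theta[k]-\theta\rangle\}\ge-WD$ by Cauchy--Schwarz, which directly produces the $\mathcal{O}(\kappa^2\log(n_\theta/\kappa)\epsilon^{-1})$ term; your ``main obstacle'' simply does not arise in that accounting. In your accounting, where $\epsilon$ is a total budget split by advanced composition, the per-step noise scale carries a $\sqrt{T}$, the CoSaMP/$\ell_1$ decoder is nonlinear so $\mathbb{E}\{\xi[t]\}\neq0$, and RIP stability only gives $\|\xi[t]\|_2\lesssim\|w[t]\|_2$ --- the \emph{same} order as the measurement noise --- so ``folding its expectation into the sparsification-bias budget'' does not make it lower order: Cauchy--Schwarz on the cross term leaves a bias of order $D\kappa m\sqrt{T}/\epsilon$, and the claimed $\mathcal{O}(\kappa^2\log(n_\theta/\kappa)\epsilon^{-1})$ term does not follow. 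The proof closes if you either (i) adopt the paper's reading of $\epsilon$ as per-iteration (no composition, no $\delta$) and treat the recovery error as a bounded per-step bias via Cauchy--Schwarz rather than as a zero-mean fluctuation in the second-moment term, or (ii) supply an actual argument (absent here and in the paper) that the decoder error is approximately unbiased, which is not true in general.
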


We can use Theorem~\ref{tho:1} to understand under what circumstances sparsification is useful. When $\epsilon \ll 1$, i.e., small privacy budget regime,
the performance degradation caused by privacy-preserving noise is dominated by
$\mathcal{O}(\kappa^2\log(n_\theta/\kappa) \epsilon^{-1})$. In this case, sparsification can improve the performance of privacy-preserving machine learning models. Upon selecting a small constant $\kappa$, the overall complexity is $\mathcal{O}(\log(n_\theta)\epsilon^{-1})$, which is considerably tighter than $\mathcal{O}(n_\theta\epsilon^{-1})$ in~\cite{bassily2014private} for a comparable setup. When $\epsilon\gg 1$, i.e., in large privacy budget regime, the performance degradation caused by privacy-preserving noise is dominated by $\mathcal{O}(\sqrt{n_\theta-\kappa})$, which implies that compression does not improve the performance of the privacy-preserving machine learning model. 

\section{Related Work} \label{sec:related_work}
Differential privacy is the gold standard for ensuring privacy while training machine learning models~\cite{sarwate2013signal,zhang2012functional, chaudhuri2009privacy, zhang2016differential,shokri2015privacy,abadi2016deep,mcmahan2017learning, zhang2018privacy,bassily2014private}. In order to avoid aggregating the private datasets in one location, distributed privacy-preserving machine learning has been gaining momentum~\cite{zhang2017dynamic, huang2019dp,abadi2016deep,wu2020value}. These studies, irrespective of being centralized or distributed, do not consider sparsification or compression in order to reduce the magnitude of the privacy-preserving noise. 

Various quantization and coding techniques have been used to reduce the communication cost in machine learning~\cite{alistarh2016qsgd,wen2017terngrad,wang2018atomo,konevcny2016federated,bernstein2018signsgd,lin2017deep,seide20141,khirirat2018gradient,bellet2015distributed,magnusson2017convergence}. For instance, in~\cite{bernstein2018signsgd}, the data owners only transmit the sign vector of the gradients to the learner, where these vectors are then aggregated by majority voting for each entry. 
A similar approach was considered in~\cite{magnusson2017convergence} for general optimization problems albeit in a centralized manner. Another approach is the greedy quantization~\cite{khirirat2018gradient} in which the largest entries of the gradient are used. This method ensures convergence to the exact optimizer while using the signed vector only convergence to a bounded neighborhood of the optimizer can be achieved. 
These studies do not incorporate differential privacy in their algorithms. 

A very recent study in~\cite{Kerkouche2020Compression} has considered compression and differential privacy in the context of federated learning. That paper uses Discrete Cosine Transform, see, e.g.,~\cite{ahmed1974discrete}, to compress the model updates transmitted by the data owners to the central learner. It then uses compressive sensing to further reduce the size of the machine learning model parameters. This enables them to increase the machine learning model quality without sacrificing privacy. Although \cite{Kerkouche2020Compression}  is similar to the framework presented in this paper, there are several significant differences. Firstly, we use a greedy quantization to sparsify the gradient. This slightly reduces the computational burden by avoiding  discrete cosine transform. Second, and most importantly, the performance  analysis in~\cite{Kerkouche2020Compression} is empirical and proof of convergence is not provided. Therefore, we would not able to \textit{a priori} guess whether compression helps or not. In this paper, however, we provide analytical performance bounds and investigate regimes in which sparsification improves the performance. This is of particular importance given the cost and environmental impact of training machine learning models~\cite{strubell2019energy}.

Compressed sensing have been previously used to ensure differential privacy with improved performance~\cite{li2011compressive}. Compressed sensing, also known as compressive sensing, is a technique for efficiently reconstructing vectors by finding solutions to under-determined linear systems~\cite{donoho2006compressed, candes2005decoding, candes2006robust}. The reconstruction is made possible by assuming that the underlying vector is sparse in some domain. Prior to~\cite{Kerkouche2020Compression}, compressed sensing  was not used to improve performance of differentially-private machine learning and, as statede earlier, that paper does not prove convergence and lacks explicit performance bounds for privacy-aware machine learning. 

\begin{algorithm}[t]
	\caption{\label{alg:0} CoSaMP: Compressive sensing recovery algorithm.}
	\begin{algorithmic}[1]
		\REQUIRE Matrix $\Psi$, sparsity level $\kappa$, noisy observation $y$
		\ENSURE $\kappa$-sparse vector  $\mathrm{CoSaMP}(\Psi,\kappa,y)$
		\STATE $z^0\leftarrow 0$ and $u\leftarrow y$
		\FOR{$k=1,\dots,6(\kappa+1)$}
		\STATE $\mathcal{T}\leftarrow \{\mbox{indices for }2\kappa\mbox{ largest entries of }\Psi^\top u\}\cup \{i:z^{k-1}\neq 0\}$
		\STATE $(b_i)_{i\in \mathcal{T}}\leftarrow \Psi_{\mathcal{T}}^\dag y$, where $\Psi_{\mathcal{T}}$ is a sub-matrix of $\Psi$ obtained by removing columns of $\Psi$ whose index is not in $\mathcal{T}$ and $^\dag$ denotes the Moore--Penrose inverse
		\STATE $(b_i)_{i\notin\mathcal{T}}\leftarrow 0$
		\STATE $\mathcal{T}'\leftarrow \{\mbox{indices for }\kappa\mbox{ largest entries of }b\}$
		\STATE $(z^k_i)_{i\in\mathcal{T}'}\leftarrow (b_i)_{i\in\mathcal{T}'}$
		\STATE $(z^k_i)_{i\notin\mathcal{T}'}\leftarrow 0$
		\STATE $u\leftarrow y-\Psi z^k$
		\ENDFOR
		\STATE Return $z^k$
	\end{algorithmic}
\end{algorithm}

\section{Differentially-Private Machine Learning}
Consider a private training dataset $\mathcal{D}:=\{(x_i,y_i)\}_{i=1}^{n} \subseteq\mathbb{X}\times\mathbb{Y}\subseteq \mathbb{R}^{n_x}\times \mathbb{R}^{n_y}$ for supervised machine learning, where $x_i\in\mathbb{X}$ and $y_i\in\mathbb{Y}$ denote inputs and  outputs, respectively. Note that $n_x,n_y>0$ are constant integers denoting the dimension of input and output spaces. We are interested in extracting a meaningful relationship between inputs and outputs using machine learning model $\mathfrak{M}: \mathbb{R}^{n_x} \times \mathbb{R}^{n_\theta} \rightarrow \mathbb{R}^{n_y}$ based on the training dataset $\mathcal{D}$, where $n_\theta>0$ is an integer denoting the dimension of machine learning model parameter. This is done by solving the optimization problem:
\begin{align}\label{eqn:machine learning}
\theta^*\in\argmin_{\theta\in\Theta}\Bigg[\lambda(\theta)+\frac{1}{n} \sum_{\{x,y\}\in\mathcal{D}}\ell (\mathfrak{M}(x;\theta),y)\Bigg],
\end{align}
where $\ell(\mathfrak{M}(x;\theta),y)$ is a loss function measuring the closeness of the outcome of the machine learning model $\mathfrak{M}(x;\theta)$ against the actual output $y$,  $\lambda(\theta)$ is a regularizing term, and $\Theta$ is the set of feasible machine learning model parameters. We use $f_{\mathcal{D}}(\theta)$ to denote the cost function of~\eqref{eqn:machine learning}, i.e., 
\begin{align*}
f_{\mathcal{D}}(\theta):=\lambda(\theta)+\frac{1}{n}\sum_{\{x,y\}\in\mathcal{D}} \ell(\mathfrak{M}(x;\theta),y).
\end{align*}  
The optimization-based machine learning formulation in~\eqref{eqn:machine learning} captures many relevant machine learning models, such as linear regression, support vector machine, and neural networks. In regression models, we have $y=\mathfrak{M}(x;\theta):=\theta^\top \psi(x)$, where $\theta$ is the machine learning model parameter and $\psi$ is any function. We train the regression model by solving the optimization problem~\eqref{eqn:machine learning} with $\ell(\mathfrak{M}(x;\theta),y):=\|y-\mathfrak{M}(x;\theta)\|_2^2$ and $\lambda(\theta):=0$. For linear support vector machines, the goal is to obtain a  classification rule $y=\sign(\mathfrak{M}(x;\theta))$ with $\mathfrak{M}(x;\theta):=\theta^\top [x^\top \; 1 ]^\top$ to group the training data into two classes of $y=\pm 1$.  We can train the support vector machine model by solving the optimization problem~\eqref{eqn:machine learning} with $\lambda(\theta):=(1/2)\theta^\top \theta $ and $ \ell(\mathfrak{M}(x;\theta),y):=\max(0,1-\mathfrak{M}(x;\theta)y).$ For neural networks, $\mathfrak{M}(x;\theta)$ describes the input-output behavior of the neural network with $\theta$ capturing parameters, such as internal weights and thresholds. This problem can be cast as~\eqref{eqn:machine learning} with $\lambda(\theta):=0$ and $\ell(\mathfrak{M}(x;\theta),y):=\|y-\mathfrak{M}(x;\theta))\|_2^2.$

We use differentially-private gradient descent for training the machine learning model. At iteration $k\in\{1,\dots, T\}$ with training horizon $T$, we compute the gradient $\nabla_\theta f_{\mathcal{D}}(\theta[k])$. We then greedily sparsify the gradient by selecting its largest $\kappa\in\{1,\dots,n_\theta\}$ entries to extract $Q(\nabla_\theta f_{\mathcal{D}}(\theta[k]))$, where, for any vector  $v\in\mathbb{R}^{n_\theta}$, 
\begin{align} \label{eqn:sparse}
[Q(v)]_{\pi(i)}
:=
\begin{cases}
[v]_{\pi(i)}, & i\leq \kappa,\\
0, & \mbox{otherwise},
\end{cases}
\end{align}
where $\pi$ is a permutation of $\{1,\dots,n_\theta\}$ such that $|[v]_{\pi(j)}|\geq |[v]_{\pi(j+1)}|$ for all $j\in\{1,\dots,n_\theta-1\}$, $[v]_i$ is the $i$-the entry of $v$, and $[Q(v)]_i$ is the $i$-the entry of $Q(v)$. Note that, by definition, in $Q(\nabla_\theta f_{\mathcal{D}}(\theta[k]))$, the largest $\kappa$ elements are equal to $\nabla_\theta f_{\mathcal{D}}(\theta[k])$ and the rest are set to zero. The outcome is a $\kappa$-sparse vector. Now that we have established this, we use the compressive differential privacy mechanism from~\cite{li2011compressive} to generate private gradients. This is summarized in Algorithm~\ref{alg:1}. We make the following standard assumption in differentially-private machine learning.

\begin{assumption} \label{assum:bounded_gradient}
	$|\partial \ell(\mathfrak{M}(x;\theta),y)/\partial \theta_i|\leq \Xi$ for all $\theta\in\Theta$, $(x,y)\in\mathbb{X}\times\mathbb{Y}$, and all $i$.
\end{assumption}

Assumption~\ref{assum:bounded_gradient} is without loss of generality if the set of feasible parameters $\Theta$ is compact and the loss function $\ell(\mathfrak{M}(x;\theta),y)$ is continuously differentiable with respect to the model parameters $\theta$. This is due to the Weierstrass extreme value theorem~\cite{rudin1976principles}.

\begin{definition}[Differential Privacy] \label{def:dp} A randomized algorithm $\mathcal{A}$  is  $\epsilon$-differentially private if
	\begin{align*}
	\mathbb{P}\bigg\{\mathcal{A}(\mathcal{D})\in\mathbb{S}\bigg\}
	&\leq \exp(\epsilon)\mathbb{P}\bigg\{\mathcal{A}(\mathcal{D}') \in\mathbb{S}\bigg\},
	\end{align*}
	where $\mathbb{S}$ is any measurable set and adjacent datasets $\mathcal{D}$ and $\mathcal{D}'$  differing in at most in one entry, i.e., $|\mathcal{D}\setminus\mathcal{D}'|=|\mathcal{D}'\setminus\mathcal{D}|\leq 1$.
\end{definition}

Now, we can formalize our notion of privacy in this paper by proving that  Algorithm~\ref{alg:1} guarantees differential privacy.

\begin{proposition} \label{proposition:1} The random mechanism in Algorithm~\ref{alg:1} is $\epsilon$-differentially private.
\end{proposition}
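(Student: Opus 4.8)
The plan is to split the mechanism of Algorithm~\ref{alg:1} into a data-dependent part and a data-independent post-processing part, and to make privacy come entirely from Laplace noise added to a low-dimensional compressed gradient. At each iteration $k\in\{1,\dots,T\}$ the mechanism forms the sparsified gradient $g_k:=Q(\nabla_\theta f_{\mathcal D}(\theta[k]))$, compresses it to the $m$-dimensional measurement $y_k:=\Psi g_k$, releases $\tilde y_k:=y_k+w_k$ with $w_k$ a vector of independent Laplace coordinates, and reconstructs $\tilde g_k:=\mathrm{CoSaMP}(\Psi,\kappa,\tilde y_k)$ by Algorithm~\ref{alg:0} before taking the (projected) gradient step. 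Everything downstream of the noise addition --- the CoSaMP recovery and the parameter update --- is a deterministic function of $\tilde y_k$ and of the data-independent matrix $\Psi$, so by post-processing invariance of differential privacy~\cite{dwork2014algorithmic} it suffices to show that the map $\mathcal D\mapsto(\tilde y_1,\dots,\tilde y_T)$ is $\epsilon$-differentially private. Since the full dataset is reused at every iteration (no amplification by subsampling), I would obtain this from adaptive sequential composition: it is enough that each $\tilde y_k$, conditioned on the history, be $(\epsilon/T)$-differentially private in $\mathcal D$, which by the Laplace-mechanism theorem reduces to computing the $\ell_1$-sensitivity
\[
\Delta_1 := \max_{\mathcal D,\,\mathcal D'\text{ adjacent}}\ \max_{\theta\in\Theta}\ \big\|\Psi Q(\nabla_\theta f_{\mathcal D}(\theta))-\Psi Q(\nabla_\theta f_{\mathcal D'}(\theta))\big\|_1
\]
and calibrating the per-coordinate Laplace scale to $T\Delta_1/\epsilon$.

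I would estimate $\Delta_1$ in three steps. First, since $\lambda(\cdot)$ does not depend on the data and $f_{\mathcal D}$ averages $n$ per-sample losses, replacing one sample perturbs every coordinate of $\nabla_\theta f_{\mathcal D}(\theta)$ by at most $2\Xi/n$ by Assumption~\ref{assum:bounded_gradient}; in particular each gradient coordinate is bounded in absolute value by a constant $G$ (equal to $\Xi$ when $\lambda\equiv0$, and to $\Xi+\max_{\theta\in\Theta}\|\nabla\lambda(\theta)\|_\infty$, finite by compactness of $\Theta$, otherwise). Second, and this is the delicate point, I would bound the $\ell_1$-sensitivity of the hard-thresholding operator~\eqref{eqn:sparse}: one cannot argue that ``only $\kappa$ coordinates change, each by $2\Xi/n$'', because the top-$\kappa$ support can itself differ between $\mathcal D$ and $\mathcal D'$ and a coordinate that enters or leaves the support jumps between $0$ and a value of magnitude up to $G$. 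Splitting indices into those lying in both supports (which contribute at most $\kappa\cdot 2\Xi/n$) and those in the symmetric difference of the two supports (at most $2\kappa$ indices, each contributing at most $G$) gives $\|Q(v)-Q(v')\|_1\le 2\kappa G+2\kappa\Xi/n=\mathcal O(\kappa\Xi)$ --- the same order as the cruder estimate $\|Q(v)-Q(v')\|_1\le\|Q(v)\|_1+\|Q(v')\|_1\le 2\kappa G$, both vectors being $\kappa$-sparse with entries bounded by $G$. Third, $\|\Psi z\|_1\le\|\Psi\|_{1\to1}\|z\|_1$ with $\|\Psi\|_{1\to1}=\max_j\sum_i|\Psi_{ij}|$, which for the bounded-entry sensing matrix used in~\cite{li2011compressive} and $m=\mathcal O(\kappa\log(n_\theta/\kappa))$ measurements yields $\Delta_1=\mathcal O(\kappa^2\log(n_\theta/\kappa)\,\Xi)$, reproducing the noise scale advertised in the introduction.

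Assembling the pieces: the Laplace-mechanism theorem makes each $\tilde y_k$ (given the history) $(\epsilon/T)$-differentially private, basic adaptive composition over the $T$ iterations makes $(\tilde y_1,\dots,\tilde y_T)$ $\epsilon$-differentially private, and post-processing by CoSaMP and the parameter update preserves this; when the sensing matrix $\Psi$ is itself sampled, it is sampled independently of $\mathcal D$, so one conditions on $\Psi$ and the guarantee holds for every realization (matching the construction of~\cite{li2011compressive}). I expect the main obstacle to be the second step --- obtaining a bound on $\|Q(v)-Q(v')\|_1$ that is robust to the change of the top-$\kappa$ support, and then carrying the constant cleanly through $\Psi$ so that the claimed $\mathcal O(\kappa^2\log(n_\theta/\kappa))$ Laplace scale comes out, and in particular recognizing that the $1/n$ factor present in the raw-gradient sensitivity is genuinely destroyed by the top-$\kappa$ selection. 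The remainder is the standard Laplace-mechanism / composition / post-processing template.
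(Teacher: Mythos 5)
There is a genuine mismatch between what you prove and what Proposition~\ref{proposition:1} asserts. The proposition is a statement about a \emph{single} invocation of Algorithm~\ref{alg:1} with its prescribed noise scale $\sqrt{p}/\epsilon$ applied to the normalized input $v/(2\Xi\kappa)$; no composition over $T$ iterations is involved (iteration-level accounting would be a separate statement about Algorithm~\ref{alg:2}). By splitting the budget into $\epsilon/T$ and recalibrating the per-coordinate Laplace scale to $T\Delta_1/\epsilon$, you are analyzing a mechanism you designed rather than the one in the algorithm: nowhere do you verify that the scale $\sqrt{p}/\epsilon$ actually used in Step~3 suffices for $\epsilon$-differential privacy. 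The part of your argument that does coincide with the paper is the sensitivity analysis of the top-$\kappa$ operator: the paper likewise abandons the $1/n$ per-sample cancellation and uses the triangle inequality $\|Q(\nabla_\theta f_{\mathcal{D}})-Q(\nabla_\theta f_{\mathcal{D}'})\|_1\leq\|Q(\nabla_\theta f_{\mathcal{D}})\|_1+\|Q(\nabla_\theta f_{\mathcal{D}'})\|_1\leq 2\Xi\kappa$, which is exactly why the algorithm normalizes by $2\Xi\kappa$ so that the scaled input has $\ell_1$-sensitivity at most $1$.

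The step where your plan would actually fail is the treatment of the sensing matrix. Algorithm~\ref{alg:1} uses a Gaussian $\Psi$ whose entries have standard deviation $1/\sqrt{p}$; these entries are unbounded, so your bound via $\|\Psi\|_{1\to 1}$ ``for the bounded-entry sensing matrix'' does not apply, and the claim that one can condition on $\Psi$ and have the guarantee ``for every realization'' is false for a fixed Laplace scale, since $\|\Psi\|_{1\to 1}$ has unbounded support. Handling the randomness of the Gaussian projection---showing that compression by $\Psi$ followed by Laplace noise of scale $\sqrt{p}/\epsilon$ is $\epsilon$-differentially private for inputs of unit $\ell_1$-sensitivity---is precisely the content of Lemma~3 of~\cite{li2011compressive}, which is the ingredient the paper's proof invokes and your proposal is missing. (Relatedly, your $\Delta_1=\mathcal{O}(\kappa^2\log(n_\theta/\kappa)\Xi)$ conflates the Laplace calibration with the $\mathcal{O}(\kappa^2\log(n_\theta/\kappa))$ reconstruction-error magnitude of Proposition~\ref{prop:bound_error_v}; the latter comes from CoSaMP recovery and the $2\Xi\kappa$ rescaling, not from the per-coordinate noise scale.) The post-processing observation for CoSaMP and the update step is fine and implicit in the paper as well.
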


\begin{proof} Let $\mathfrak{P},\mathfrak{P}'\in\mathbb{R}^{n_\theta\times n_\theta}$ be selection matrices such that $Q(\nabla_\theta f_{\mathcal{D}}(\theta[k]))=\mathfrak{P} \nabla_\theta f_{\mathcal{D}}(\theta[k])$ and $Q(\nabla_\theta f_{\mathcal{D}'}(\theta[k]))=\mathfrak{P}' \nabla_\theta f_{\mathcal{D}'}(\theta[k])$. 
Note that 
\begin{align*}
\|(Q(\nabla_\theta f_{\mathcal{D}}(\theta[k]))- Q(\nabla_\theta f_{\mathcal{D}'}(\theta[k])))/(2\Xi \kappa/n)\|_1
\leq &\frac{1}{2\Xi \kappa}\|\mathfrak{P} \nabla_\theta f_{\mathcal{D}}(\theta[k])- \mathfrak{P}' \nabla_\theta f_{\mathcal{D}'}(\theta[k])\|_1\\
\leq &\frac{1}{2\Xi \kappa}(\|\mathfrak{P} \nabla_\theta f_{\mathcal{D}}(\theta[k])\|+\| \mathfrak{P}' \nabla_\theta f_{\mathcal{D}'}(\theta[k])\|_1)\\
\leq &1.
\end{align*}
The rest follows from Lemma 3 in~\cite{li2011compressive}. 
\end{proof}

Let us investigate the utility of differentially-private vectors generated by Algorithm~\ref{alg:1}. The magnitude of the noise in this step plays an important role in the performance of differentially-private machine learning models.

\begin{proposition} \label{prop:bound_error_v}
There exists $C>0$ such that 
\begin{align*}
\mathbb{E}\{\|\mathcal{M}(v)-v\|_2^2\}\leq \frac{C\Xi^2 \kappa^4 \log^2(n_\theta/\kappa)}{\epsilon^2 }.
\end{align*}
\end{proposition}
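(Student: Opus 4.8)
The plan is to peel the mechanism apart into its compressed-sensing reconstruction error and the energy of the additive privacy noise, bounding each separately. Recall that, on a $\kappa$-sparse input $v$, the compressive mechanism $\mathcal{M}$ of~\cite{li2011compressive} used in Algorithm~\ref{alg:1} forms the linear measurements $\Psi v$ with a fixed measurement matrix $\Psi\in\mathbb{R}^{m\times n_\theta}$ having $m=\mathcal{O}(\kappa\log(n_\theta/\kappa))$ rows, releases $\tilde y=\Psi v+e$ with $e$ an i.i.d.\ Laplace vector whose scale is calibrated so that the release is $\epsilon$-differentially private (cf.\ Proposition~\ref{proposition:1}), and returns $\mathcal{M}(v)=\mathrm{CoSaMP}(\Psi,\kappa,\tilde y)$. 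So it suffices to bound the expected squared reconstruction error of CoSaMP applied to an exactly $\kappa$-sparse vector observed through $\Psi$ and corrupted by the measurement noise $e$.

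First I would invoke the robustness guarantee for CoSaMP: provided $\Psi$ satisfies the restricted isometry property of order $4\kappa$ with a sufficiently small constant — which the choice $m=\mathcal{O}(\kappa\log(n_\theta/\kappa))$ secures for a suitably normalized random matrix, and which also makes the $6(\kappa+1)$ iterations of Algorithm~\ref{alg:0} enough to reach the noise floor — for $\kappa$-sparse $v$ and $\tilde y=\Psi v+e$ one has $\|\mathcal{M}(v)-v\|_2\le C_0\|e\|_2$ for an absolute constant $C_0$ (this is the content behind Lemma~3 of~\cite{li2011compressive} already used in the proof of Proposition~\ref{proposition:1}). Squaring and taking expectations gives $\mathbb{E}\{\|\mathcal{M}(v)-v\|_2^2\}\le C_0^2\,\mathbb{E}\{\|e\|_2^2\}$, so everything reduces to bounding the noise energy $\mathbb{E}\{\|e\|_2^2\}$.

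For the noise energy, the Laplace mechanism sets the per-coordinate scale of $e$ equal to the $\ell_1$-sensitivity of $\Psi v$ (over adjacent datasets, with $v=Q(\nabla_\theta f_{\mathcal{D}}(\theta[k]))$) divided by $\epsilon$. By the computation in the proof of Proposition~\ref{proposition:1}, $\|Q(\nabla_\theta f_{\mathcal{D}}(\theta[k]))-Q(\nabla_\theta f_{\mathcal{D}'}(\theta[k]))\|_1\le 2\Xi\kappa$ for adjacent $\mathcal{D},\mathcal{D}'$ (bounding $1/n\le 1$), and this $\ell_1$ perturbation is amplified by at most $\|\Psi\|_{1\to 1}=\max_j\sum_i|\Psi_{ij}|\le\sqrt{m}$ after passing through the normalized $\Psi$; hence the scale is at most $b=2\sqrt{m}\,\Xi\kappa/\epsilon$. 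Since $e$ consists of $m$ independent $\mathrm{Lap}(b)$ entries, $\mathbb{E}\{\|e\|_2^2\}=2mb^2=8m^2\Xi^2\kappa^2/\epsilon^2$. Substituting $m=\mathcal{O}(\kappa\log(n_\theta/\kappa))$, so that $m^2=\mathcal{O}(\kappa^2\log^2(n_\theta/\kappa))$, yields $\mathbb{E}\{\|\mathcal{M}(v)-v\|_2^2\}\le 8C_0^2\,m^2\Xi^2\kappa^2/\epsilon^2=\mathcal{O}(\Xi^2\kappa^4\log^2(n_\theta/\kappa)/\epsilon^2)$, which is the claim with $C$ absorbing $8C_0^2$ and the constant hidden in $m=\mathcal{O}(\kappa\log(n_\theta/\kappa))$.

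The step I expect to be the main obstacle is the sensitivity bookkeeping in the last paragraph, since that is where the exact exponents $\kappa^4$ and $\log^2(n_\theta/\kappa)$ are produced: they come from compounding the $\mathcal{O}(\Xi\kappa)$ $\ell_1$-sensitivity of the sparsified gradient, the $\sqrt{m}$ blow-up incurred when the measurements pass through $\Psi$, and the $m=\mathcal{O}(\kappa\log(n_\theta/\kappa))$ coordinates over which the Laplace variance accumulates. One has to keep the normalization of $\Psi$ consistent with the form of the CoSaMP bound invoked (equivalently, work with $\Psi$ RIP-normalized and carry the effective noise scale $\sqrt{m}\,\Delta_1/\epsilon$ through the argument). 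The CoSaMP robustness step itself is routine once RIP of the appropriate order is available.
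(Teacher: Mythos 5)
Your proposal is correct and takes essentially the same route as the paper: reduce the error to the CoSaMP robustness guarantee $\|\mathcal{M}(v)-v\|_2\le C_0\|e\|_2$ and then bound the Laplace noise energy using $p=\mathcal{O}(\kappa\log(n_\theta/\kappa))$, which yields the same $\Xi^2\kappa^4\log^2(n_\theta/\kappa)/\epsilon^2$ bound with the $\kappa^4$ coming from the $(2\Xi\kappa)$ sensitivity normalization times $p^2$. The only differences are minor: you compute $\mathbb{E}\{\|e\|_2^2\}=2pb^2$ directly from the Laplace variance (and re-derive the noise scale from the $\ell_1$-sensitivity amplified through $\Psi$, which matches the scale $\sqrt{p}/\epsilon$ on the $(2\Xi\kappa)$-normalized input actually used in Algorithm~\ref{alg:1}), whereas the paper obtains the same second moment by a union-bound tail estimate followed by tail integration; both arguments share the same implicit caveat that the CoSaMP bound is invoked as if the random $\Psi$ satisfies the RIP deterministically.
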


\begin{algorithm}[t]
	\caption{\label{alg:1} Compressive differential privacy mechanism}
	\begin{algorithmic}[1]
		\REQUIRE Privacy budget $\epsilon$ and $\kappa$-sparse vector $v$
		\ENSURE Differentially-private output $\mathcal{M}(v)$
		\STATE $p\leftarrow \mathcal{O}(\kappa\log(n_\theta/\kappa))$ such that $p\geq 3$
		\STATE Generate a random sampling matrix $\Psi\in\mathbb{R}^{p\times n_\theta}$ such that entries of $\sqrt{p}\Psi$ are zero-mean Gaussian variables with unit variance 
		\STATE $y\leftarrow \Psi (v/(2\Xi \kappa))+e$, where $e$ is a vector i.i.d. zero-mean Laplace noise with scale $\sqrt{p}/\epsilon$
		\STATE $v^*\leftarrow \mathrm{CoSaMP}(\Psi,\kappa,y)$ 
		\STATE Return $\mathcal{M}(v)=(2\Xi \kappa)v^*$
	\end{algorithmic}
\end{algorithm}

\begin{proof}
	Recall that $e$ is a vector of i.i.d. zero-mean Laplace noises with scale $b:=\sqrt{p}/\epsilon$.
	Note that $\{e\,|\,\|e\|_2\geq \alpha\}\subseteq \{e\,|\,\exists i:|e_i|\geq \alpha/\sqrt{p}\}$. Therefore, we have 
		\begin{align*}
		\mathbb{P}\{\|e\|_2\geq \alpha\} 
		&\leq  \mathbb{P}\{e\,|\,\exists i:|e_i|\geq \alpha/\sqrt{p}\}\\
		&= p \mathbb{P}\{|e_i|\geq \alpha/\sqrt{p}\}\\
		&=p\exp(-\alpha\epsilon /p)\\
		&=\exp(-\alpha \epsilon \log(p)/p).
		\end{align*}
	From~\cite{needell2009cosamp}, we know that there exists $C'>0$ such that 
	$\|v^*-v/(2\Xi \kappa/n)\|_2\leq C' \alpha$ if $\|e\|_2\leq \alpha$. Therefore,
	\begin{align*}
	\mathbb{P}\{\|v^*-v/(2\Xi \kappa)\|_2\geq C' \alpha\}\leq \mathbb{P}\{\|e\|_2\geq \alpha\}.
	\end{align*}
	Now, note that
	\begin{align*}
\mathbb{E}\left\{\left\|v^*\hspace{-.03in}-\hspace{-.03in}\frac{1}{2\Xi \kappa} v\right\|_2^2\right\}
\hspace{-.03in}=\hspace{-.03in} &\int_0^\infty \mathbb{P}\left\{\left\|v^*\hspace{-.03in}-\hspace{-.03in}\frac{1}{2\Xi \kappa} v\right\|_2^2\geq t\right\} \mathrm{d}t\\
\hspace{-.03in}= \hspace{-.03in}&\int_0^\infty \mathbb{P}\left\{\left\|v^*\hspace{-.03in}-\hspace{-.03in}\frac{1}{2\Xi \kappa} v\right\|_2\geq \sqrt{t}\right\} \mathrm{d}t\\
\leq &\int_0^\infty \mathbb{P}\{\|e\|_2\geq \sqrt{t}/C'\} \mathrm{d}t\\
\leq &\int_0^\infty \exp\left(-\frac{\sqrt{t}\epsilon \log(p)}{C' p}\right) \mathrm{d}t\\
=&\frac{2C^{\prime 2} p^2}{\epsilon^2 \log(p)^2}\\
\leq & \frac{2C^{\prime 2} p^2}{\epsilon^2}\\
\leq & \frac{2C^{\prime 2}C^{\prime\prime 2} \kappa^2 \log^2(n_\theta/\kappa)}{\epsilon^2},
\end{align*}
where  the penultimate inequality follows from that $p\geq 3$ and the last inequality is because $p\leq C'' \kappa \log(n_\theta/\kappa)$.
As a result, 
\begin{align*}
\mathbb{E}\{\|\mathcal{M}(v)-v\|_2^2\}
&=\left({2\Xi \kappa}\right)^2
\mathbb{E}\{\|v^*-v/(2\Xi \kappa/n)\|_2^2\}\\
&\leq \frac{8C^{\prime 2}C^{\prime\prime 2}\Xi^2 \kappa^4 \log^2(n_\theta/\kappa)}{\epsilon^2}.
\end{align*}
The rest follows from defining $C=8C^{\prime 2}C^{\prime\prime 2}$.
\end{proof}

Now, we use the differentially-private sparsified gradient $\mathcal{M}(Q(\nabla_\theta f(\theta[k])))$ extracted from Algorithm~\ref{alg:1} to train the machine learning model using the stochastic gradient descent algorithm:
\begin{align}
\theta[k+1]=\theta[k]-\rho_k \mathcal{M}(Q(\nabla_\theta f(\theta[k]))),
\end{align}
where $\rho_k>0$ refers to the step size (in the optimization literature) or the learning rate (in the machine learning literature). Note that, if we define the additive privacy noise $w[k]=\mathcal{M}(Q(\nabla_\theta f(\theta[k])))-Q(\nabla_\theta f(\theta[k]))$, we get
\begin{align}
\theta[k+1]=\theta[k]-\rho_k(Q(\nabla_\theta f(\theta[k]))+w[k]).
\end{align}
Now, we can use the same line of reasoning as in~\cite{khirirat2018gradient} and~\cite{shamir2013stochastic} to develop a performance bound for the privacy-preserving machine learning model. Before presenting the proof, we make the following standing assumption.

\begin{assumption} \label{assum:iid}
$w[k]$ and $w[k']$ are statistically independent for all $k\neq k'$.
\end{assumption}

Note that noise $w[k]$ captures the effect of the additive noise in Step 3 of Algorithm~\ref{alg:1}. In fact, $w[k]=0$ if there is no additive noise. Noting that the additive noises are independently and identically distributed, this assumption feels intuitively true but proving it is a challenge task that is left for future research.

\begin{algorithm}[t]
	\caption{\label{alg:2} Sparsifying stochastic gradient algorithm with differential privacy and compressive sensing.}
	\begin{algorithmic}[1]
		\REQUIRE Privacy budget $\epsilon$, initialization $\theta[1]$
		\ENSURE Privacy-preserving machine learning model $\theta[T]$
		\FOR{$k=1,\dots,T-1$}
		\STATE $g[k]\leftarrow \nabla_\theta f(\theta[k])$
		\STATE $g'[k]\leftarrow Q(g[k])$
		\STATE $\tilde{g}[k]\leftarrow \mathcal{M}(g'[k])$ by Algorithm~\ref{alg:1}
		\STATE $\theta[k+1]\leftarrow \theta[k]-\rho_k\tilde{g}[k]$
		\ENDFOR
		\STATE Return $\theta[T]$
	\end{algorithmic}
\end{algorithm}

\setcounter{theorem}{0}
\begin{theorem} \label{tho:1}
	In addition to Assumptions~\ref{assum:bounded_gradient} and~\ref{assum:iid}, suppose that $f_{\mathcal{D}}$ is convex and continuously differentiable, $\Theta$ is convex, there exist a constant $D$ such that $\mathrm{diam}(\Theta):=\sup_{\theta,\theta'} \|\theta-\theta'\|_2\leq D$, and the learning rate $\rho_k=c/\sqrt{k}$ for some constant\footnote{Following the proof of the theorem, we can see that $c=D/(2G)$, where $C$ is given in the proof of Proposition~\ref{prop:bound_error_v} and $G=\Xi\kappa^{1/2}(1+{C\kappa^3 \log^2(n_\theta/\kappa)}/{(\epsilon^2}))^{1/2}$.} $c>0$. Then, the iterates of Algorithm~\ref{alg:2} satisfy
\begin{align*}
\mathbb{E}\{f_{\mathcal{D}}(\theta[T])\}-f_{\mathcal{D}}(\theta^*)
\leq D\Xi(2+\log(T))
\Bigg(& 
\frac{\sqrt{2}}{\sqrt{T}}\sqrt{\kappa\left(1+\frac{C\kappa^3 \log^2(n_\theta/\kappa)}{\epsilon^2 }\right)}+\sqrt{2(n_\theta-\kappa)}+\frac{\sqrt{C}\kappa^2\log(n_\theta/\kappa)}{\epsilon} \Bigg).
\end{align*}
\end{theorem}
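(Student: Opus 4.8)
The plan is to recast Algorithm~\ref{alg:2} as a biased (perturbed) stochastic gradient method and then run the last-iterate convex analysis of Shamir and Zhang~\cite{shamir2013stochastic}, in the manner of~\cite{khirirat2018gradient}, carrying the sparsification and recovery errors as additive perturbations. Write $g[k]:=\nabla_\theta f_{\mathcal{D}}(\theta[k])$, let $r[k]:=g[k]-Q(g[k])$ be the (deterministic) sparsification residual, and $w[k]:=\mathcal{M}(Q(g[k]))-Q(g[k])$ be the privacy/recovery error, so that the recursion reads $\theta[k+1]=\theta[k]-\rho_k(g[k]-r[k]+w[k])$. First I would record two estimates. (i) \emph{Second moment:} $Q(g[k])$ retains only $\kappa$ entries of $g[k]$, and $\|g[k]\|_\infty\le\Xi$ by Assumption~\ref{assum:bounded_gradient}, so $\|Q(g[k])\|_2\le\Xi\sqrt{\kappa}$; combining this with Proposition~\ref{prop:bound_error_v} applied to the $\kappa$-sparse vector $Q(g[k])$ and with $(a+b)^2\le 2a^2+2b^2$ gives $\mathbb{E}\{\|\tilde g[k]\|_2^2\}\le 2G^2$, where $G^2=\Xi^2\kappa(1+C\kappa^3\log^2(n_\theta/\kappa)/\epsilon^2)$. (ii) \emph{Perturbation:} the discarded entries are each at most $\Xi$ in absolute value, so $\|r[k]\|_2\le\Xi\sqrt{n_\theta-\kappa}$, while Proposition~\ref{prop:bound_error_v} with Jensen's inequality gives $\mathbb{E}\{\|w[k]\|_2\}\le\sqrt{C}\,\Xi\kappa^2\log(n_\theta/\kappa)/\epsilon$; hence $\mathbb{E}\{\|g[k]-\tilde g[k]\|_2\}\le B$ with $B$ of the order of the two perturbation terms appearing in the statement (the precise constants being routine bookkeeping). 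Assumption~\ref{assum:iid} is what lets these conditional estimates be chained across iterations through the law of total expectation, since $\theta[k]$ depends on $w[1],\dots,w[k-1]$.

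Next comes the standard descent step. Expanding $\|\theta[k+1]-\theta^*\|_2^2$ and rearranging,
\begin{align*}
\langle \tilde g[k],\theta[k]-\theta^*\rangle=\frac{1}{2\rho_k}\big(\|\theta[k]-\theta^*\|_2^2-\|\theta[k+1]-\theta^*\|_2^2\big)+\frac{\rho_k}{2}\|\tilde g[k]\|_2^2.
\end{align*}
Convexity of $f_{\mathcal{D}}$ gives $f_{\mathcal{D}}(\theta[k])-f_{\mathcal{D}}(\theta^*)\le\langle g[k],\theta[k]-\theta^*\rangle=\langle\tilde g[k],\theta[k]-\theta^*\rangle+\langle g[k]-\tilde g[k],\theta[k]-\theta^*\rangle$, and Cauchy--Schwarz with $\|\theta[k]-\theta^*\|_2\le D$ bounds the last term by $D\|g[k]-\tilde g[k]\|_2$. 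Taking expectations and inserting (i)--(ii) yields, for every $k$,
\begin{align*}
\mathbb{E}\{f_{\mathcal{D}}(\theta[k])\}-f_{\mathcal{D}}(\theta^*)\le\frac{1}{2\rho_k}\mathbb{E}\{\|\theta[k]-\theta^*\|_2^2-\|\theta[k+1]-\theta^*\|_2^2\}+\rho_k G^2+DB,
\end{align*}
which is precisely the per-iteration inequality behind Shamir--Zhang, with an extra constant perturbation $DB$.

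It remains to turn this into a last-iterate guarantee. With $\rho_k=c/\sqrt{k}$, summing the telescoping part by Abel summation already bounds the running-average suboptimality by $\mathcal{O}(D^2/(c\sqrt{T})+cG^2/\sqrt{T}+DB)$. To control $\theta[T]$ itself I would use the suffix-averaging argument of~\cite{shamir2013stochastic}: for $1\le k\le T$ set $S_k:=\tfrac1k\sum_{i=T-k+1}^{T}(f_{\mathcal{D}}(\theta[i])-f_{\mathcal{D}}(\theta^*))$, bound $S_{k-1}-S_k$ via the per-iteration inequality at index $T-k+1$ together with convexity, and telescope from the full average $S_T$ up to $S_1=f_{\mathcal{D}}(\theta[T])-f_{\mathcal{D}}(\theta^*)$. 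The harmonic sum $\sum_{k=1}^{T}1/k\le 1+\log T$ produces the $2+\log T$ factor, the constant perturbation $DB$ is carried with the same weighting, and with the step-size constant $c=D/(2G)$ of the footnote the noise contribution assembles into $\sqrt{2}\,DG(2+\log T)/\sqrt{T}$ and the perturbation into $DB(2+\log T)$, i.e.\ the stated bound.

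The delicate part is this last step: re-running the Shamir--Zhang last-iterate telescoping while carrying the two non-vanishing perturbations---the $\mathcal{O}(\sqrt{n_\theta-\kappa})$ sparsification bias and the $\mathcal{O}(\kappa^2\log(n_\theta/\kappa)/\epsilon)$ recovery error---without degrading the $2+\log T$ dependence. One cannot invoke the usual unbiased-gradient shortcut, because CoSaMP recovery is not mean-zero (this is exactly why Proposition~\ref{prop:bound_error_v} is stated as a second-moment bound), so $\mathbb{E}\{\|g[k]-\tilde g[k]\|_2\}$ must be kept explicit throughout and one must check that it accumulates with weight $\mathcal{O}(2+\log T)$ rather than $\mathcal{O}(\sqrt{T})$. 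The ``large enough $T$'' hypothesis enters only to dominate lower-order terms and the step-size constant and is otherwise harmless.
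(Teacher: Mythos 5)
Your proposal is correct and follows essentially the same route as the paper: recast the update as a perturbed SGD, bound the second moment via Assumption~\ref{assum:bounded_gradient} and Proposition~\ref{prop:bound_error_v}, bound the sparsification residual and the recovery error as additive biases (the latter via Jensen), and run the Shamir--Zhang suffix-averaging last-iterate argument so the two non-vanishing perturbations pick up only the $2+\log T$ factor. The only cosmetic differences are that you bound $\|Q(g)-g\|_2$ directly from the $\ell_\infty$ bound (slightly tighter than the paper's use of Lemma~1 of~\cite{khirirat2018gradient}, which yields the stated $\sqrt{2(n_\theta-\kappa)}\,\Xi$) and you keep an explicit factor $2$ in the second-moment bound, affecting only the choice of $c$ and absolute constants.
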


\begin{proof}
	The proof follows a similar line of reasoning as in~\cite[Thereom 2]{shamir2013stochastic}, albeit with extra complications due to sparsification of the gradient. 
Note that $\mathbb{E}\{\|Q(\nabla_\theta f_{\mathcal{D}}(\theta[k]))+w[k]\|_2^2\}\leq G^2$, where
\begin{align*}
G^2:= \Xi^2\kappa\left(1+\frac{C\kappa^3 \log^2(n_\theta/\kappa)}{\epsilon^2}\right).
\end{align*}
By convexity of $\Theta$, for any $\theta\in\Theta$, we get
\begin{align*}
\mathbb{E}\{\|\theta[k+1]-\theta\|_2^2\}
=&\mathbb{E}\{\|\Pi_\Theta(\theta[k]-\rho_k (Q(\nabla_\theta f_{\mathcal{D}}(\theta[k]))+w[k]))-\theta\|\}\\
\leq &\mathbb{E}\{\|\theta[k]-\rho_k (Q(\nabla_\theta f_{\mathcal{D}}(\theta[k]))+w[k])-\theta\|\}\\
\leq & \mathbb{E}\{\|\theta[k]-\theta\|_2^2\}+\rho_k^2 G^2-2\rho_k \mathbb{E}\{\langle Q(\nabla_\theta f_{\mathcal{D}}(\theta[k]))+w[k],\theta[k]-\theta\rangle\}.
\end{align*}
Therefore, 
\begin{align*}
\mathbb{E}\{\langle Q(\nabla_\theta f_{\mathcal{D}}(\theta[k]))+w[k],\theta[k]-\theta\rangle\}
\leq &\frac{\mathbb{E}\{\|\theta[k]-\theta\|_2^2\}-  \mathbb{E}\{\|\theta[k+1]-\theta\|_2^2\}}{2\rho_k}+\frac{G^2}{2}\rho_k.
\end{align*}
As a result, for any $t\in\{1,\dots,T \}$, we get
\begin{align}
\sum_{k=T-t}^T \mathbb{E}\{\langle Q(\nabla_\theta f_{\mathcal{D}}(\theta[k]))+w[k],\theta[k]-\theta\rangle\}\nonumber
\leq &\frac{1}{2\rho_{T-t}} \mathbb{E}\{\|\theta[T-t]-\theta\|_2^2\}
+\frac{G^2}{2}\sum_{k=T-t}^{T}\rho_k\nonumber\\
&+\sum_{k=T-t+1}^T \frac{\mathbb{E}\{\|\theta[k]-\theta\|_2^2\}}{2}\left(\frac{1}{\rho_k}-\frac{1}{\rho_{k+1}}\right).\label{eqn:4}
\end{align}
Setting $\theta=\theta[T-t]$, $\rho_k=c/\sqrt{k}$, and $\|\theta[k]-\theta\|_2^2\leq D^2$ in~\eqref{eqn:4}, we get
\begin{align}
\sum_{k=T-t}^T\mathbb{E}\{\langle Q(\nabla_\theta f_{\mathcal{D}}(\theta[k]))+w[k],\theta[k]-\theta[T-t]\rangle\}
\leq \frac{D^2}{2c}(\sqrt{T}-\sqrt{T-t})
+\frac{G^2 c}{2}\sum_{k=T-t}^{T}\frac{1}{\sqrt{k}}.\label{eqn:1}
\end{align}
Convexity and differentiablity of $f$ implies that
\begin{align}
\langle\nabla_\theta f_{\mathcal{D}}(\theta[k]),\theta[k]-\theta[T-t]\rangle
&\geq f_{\mathcal{D}}(\theta[k])-f_{\mathcal{D}}(\theta[T-t]).
\label{eqn:convex_f_lower_bound}
\end{align}
Further, 
\begin{align}
\mathbb{E}\{\langle w[k],\theta[k]-\theta[T-t]\rangle\}
&\geq \mathbb{E}\{-|\langle w[k],\theta[k]-\theta[T-t]\rangle|\}\nonumber\\
&\geq \mathbb{E}\{-\|w[k]\|_2\|\theta[k]-\theta[T-t]\|_2\}\nonumber\\
&\geq -WD, \label{eqn:proof:lowerbound_w_theta}
\end{align}
where the first inequality follows from the definition of absolute value, the second inequality follows from the Cauchy--Schwarz inequality, and the third inequality follows from $\mathbb{E}\{\|w[k]\|_2^2\}\leq W^2$ and $\|\theta[k]-\theta[T-t]\|_2^2\leq D^2$. It is worth mentioning that combining Proposition~\ref{prop:bound_error_v} and the Jensen's inequality~\cite[Proposition 3.5.1]{lange2010applied} shows that
\begin{align*}
W:=\frac{\sqrt{C}\Xi \kappa^2\log(n_\theta/\kappa)}{\epsilon}.
\end{align*}
Now, note that 
\begin{align*}
\mathbb{E}\{\langle Q(\nabla_\theta f_{\mathcal{D}}(\theta[k]))+w[k],\theta[k]-\theta[T-t]\rangle\}
=&\mathbb{E}\{\langle Q(\nabla_\theta f_{\mathcal{D}}(\theta[k]))-\nabla_\theta f_{\mathcal{D}}(\theta[k]),\theta[k]-\theta[T-t]\rangle\}\\
&+\mathbb{E}\{\langle\nabla_\theta f_{\mathcal{D}}(\theta[k]),\theta[k]-\theta[T-t]\rangle\}\\
& +\mathbb{E}\{\langle w[k],\theta[k]-\theta[T-t]\rangle\} \\
\geq& \mathbb{E}\{\langle Q(\nabla_\theta f_{\mathcal{D}}(\theta[k]))-\nabla_\theta f_{\mathcal{D}}(\theta[k]),\theta[k]-\theta[T-t]\rangle\}\\
&+\mathbb{E}\{f_{\mathcal{D}}(\theta[k])-f_{\mathcal{D}}(\theta[T-t])\}-WD
\end{align*}
where the inequality follows~\eqref{eqn:convex_f_lower_bound} and~\eqref{eqn:proof:lowerbound_w_theta}. For any vector $v\in\mathbb{R}^{n_\theta}$, we have
\begin{align*}
\|Q(v)-v\|_2^2
=&\|Q(v)\|_2^2+\|v\|_2^2-2\langle Q(v),v\rangle\\
\leq & 2(1-\kappa/n_\theta) \|v\|_2^2,
\end{align*}
where the inequality follows from Lemma~1 in~\cite{khirirat2018gradient}. By using 
the Cauchy--Schwarz inequality and Assumption~\ref{assum:bounded_gradient}, we get
\begin{align*}
\mathbb{E}\{\langle Q(\nabla_\theta f_{\mathcal{D}}(\theta[k]))-\nabla_\theta f_{\mathcal{D}}(\theta[k]),\theta[k]-\theta[T-t]\rangle\}
&\geq -\|\theta[k]-\theta[T-t]\|_2 \|Q(\nabla_\theta f_{\mathcal{D}}(\theta[k]))-\nabla_\theta f_{\mathcal{D}}(\theta[k])\|_2\\
&\geq -D\sqrt{2(1-\kappa/n_\theta)} \|\nabla_\theta f_{\mathcal{D}}(\theta[k])\|_2\\
&\geq -D\sqrt{2(n_\theta-\kappa)} \Xi.
\end{align*}
Hence, 
\begin{align}
\mathbb{E}\{\langle Q(\nabla_\theta f_{\mathcal{D}}(\theta[k]))+w[k],\theta[k]-\theta[T-t]\rangle\}
\geq& \mathbb{E}\{f_{\mathcal{D}}(\theta[k])-f_{\mathcal{D}}(\theta[T-t])\}-D(\sqrt{2(n_\theta-\kappa)} \Xi+W).\label{eqn:2}
\end{align}
Substituting~\eqref{eqn:2} into~\eqref{eqn:1} while noting that $\sum_{k=T-t}^T 1/\sqrt{k}\leq 2(\sqrt{T}-\sqrt{T-t-1})$, we get
\begin{align}
\sum_{k=T-t}^T \mathbb{E}\{f_{\mathcal{D}}(\theta[k])-f_{\mathcal{D}}(\theta[T-t])\}
\leq &\left(\frac{D^2}{2c}+G^2 c\right)(\sqrt{T}-\sqrt{T-t-1})+D(\sqrt{2(n_\theta-\kappa)} \Xi+W) t\nonumber\\
\leq &\left(\frac{D^2}{2c}+G^2 c\right)\frac{t+1}{\sqrt{T}}
+D(\sqrt{2(n_\theta-\kappa)} \Xi+W)t.\label{eqn:3}
\end{align}
Define 
\begin{align*}
S_t:=\frac{1}{t+1} \sum_{T-t}^T f_{\mathcal{D}}(\theta[k]).
\end{align*}
The inequality in~\eqref{eqn:3} gives
\begin{align*}
-\mathbb{E}\{f_{\mathcal{D}}(\theta[T-t])\}
\leq &-\mathbb{E}\{S_t\}+\frac{D^2/(2c)+cG^2}{\sqrt{T}}+D(\sqrt{2(n_\theta-\kappa)} \Xi+W) \frac{t}{t+1}\\
\leq &-\mathbb{E}\{S_t\}+\frac{D^2/(2c)+cG^2}{\sqrt{T}}+D(\sqrt{2(n_\theta-\kappa)} \Xi+W).
\end{align*}
Now, note that
\begin{align*}
t\mathbb{E}\{S_{t-1}\}
=&(t+1)\mathbb{E}\{S_t\}-\mathbb{E}\{f_{\mathcal{D}}(\theta[T-t])\}\\
\leq &t\mathbb{E}\{S_t\}+\frac{D^2/(2c)+cG^2}{\sqrt{T}} +D(\sqrt{2(n_\theta-\kappa)} \Xi+W),
\end{align*}
and, as a result,
\begin{align*}
\mathbb{E}\{S_{t-1}\}
\leq \mathbb{E}\{S_t\}&+\frac{D^2/(2c)+cG^2}{t\sqrt{T}} +\frac{D(\sqrt{2(n_\theta-\kappa)} \Xi+W)}{t}.
\end{align*}
This gives
\begin{align*}
\mathbb{E}\{f_{\mathcal{D}}(\theta[T])\}=&\mathbb{E}\{S_0\}\\
\leq  &\mathbb{E}\{S_{T-1}\} +\frac{D^2/(2c)+cG^2}{\sqrt{T}}\sum_{t=1}^{T-1}\frac{1}{t}+D(\sqrt{2(n_\theta-\kappa)} \Xi+W)\sum_{t=1}^{T-1} \frac{1}{t}.
\end{align*}
Noting that $\sum_{t=1}^{T-1}\frac{1}{t}\leq 1+\log(T)$, we get
\begin{align}
\mathbb{E}\{f_{\mathcal{D}}(\theta[T])\}
\leq & \mathbb{E}\{S_{T-1}\}+\frac{(D^2/(2c)+cG^2)(1+\log(T))}{\sqrt{T}}+D(\sqrt{2(n_\theta-\kappa)} \Xi+W)(1+\log(T)).\label{eqn:6}
\end{align}
Plugging $\theta=\theta^*$, $t=T-1$, $\rho_k=c/\sqrt{k}$, and $\|\theta[k]-\theta\|_2^2\leq D^2$  into~\eqref{eqn:4}, we get
\begin{align}
\sum_{k=1}^T \mathbb{E}\{\langle Q(\nabla_\theta f_{\mathcal{D}}(\theta[k]))+w[k],\theta[k]-\theta^*\rangle\}
\leq &\frac{D^2}{2c}(\sqrt{T}-\sqrt{T-t})
+\frac{G^2 c}{2}\sum_{k=1}^{T}\frac{1}{\sqrt{k}}.
\end{align}
Following similar steps, we can prove that
\begin{align*}
\sum_{k=1}^T \mathbb{E}\{f_{\mathcal{D}}(\theta[k])-f_{\mathcal{D}}(\theta^*)\}
\leq &\left(\frac{D^2}{2c}+G^2 c\right)\sqrt{T}+D(\sqrt{2(n_\theta-\kappa)} \Xi+W) (T-1),
\end{align*}
which shows that
\begin{align}
\mathbb{E}\{S_{T-1}\}-f_{\mathcal{D}}(\theta^*)
=&\frac{1}{T} \mathbb{E}\left\{\sum_{k=1}^T f_{\mathcal{D}}(\theta[k])-f_{\mathcal{D}}(\theta^*)\right\}\nonumber\\
\leq & \left(\frac{D^2}{2c}+G^2 c\right)\frac{1}{\sqrt{T}}+D(\sqrt{2(n_\theta-\kappa)} \Xi+W).\label{eqn:5}
\end{align}
Combining~\eqref{eqn:6} and~\eqref{eqn:5} gives
\begin{align*}
\mathbb{E}\{f_{\mathcal{D}}(\theta[T])\}-f_{\mathcal{D}}(\theta^*)
\leq&\frac{(D^2/(2c)+cG^2)(2+\log(T))}{\sqrt{T}}+D(\sqrt{2(n_\theta-\kappa)} \Xi+W)(2+\log(T)).
\end{align*}
Selecting $c=D/(\sqrt{2}G)$ simplifies this bound to
\begin{align*}
\mathbb{E}\{f_{\mathcal{D}}(\theta[T])\}-f_{\mathcal{D}}(\theta^*)
\leq&\frac{\sqrt{2}DG(2+\log(T))}{\sqrt{T}}+D(\sqrt{2(n_\theta-\kappa)} \Xi+W)(2+\log(T))\\
=&D(2+\log(T)) \left(\frac{\sqrt{2}G}{\sqrt{T}}+\sqrt{2(n_\theta-\kappa)} \Xi+W \right).
\end{align*}
This concludes the proof.
\end{proof}

\begin{figure}[t]
	\centering
	\begin{tikzpicture}
	\node[] at (8.6,0) {\includegraphics[width=0.25\linewidth]{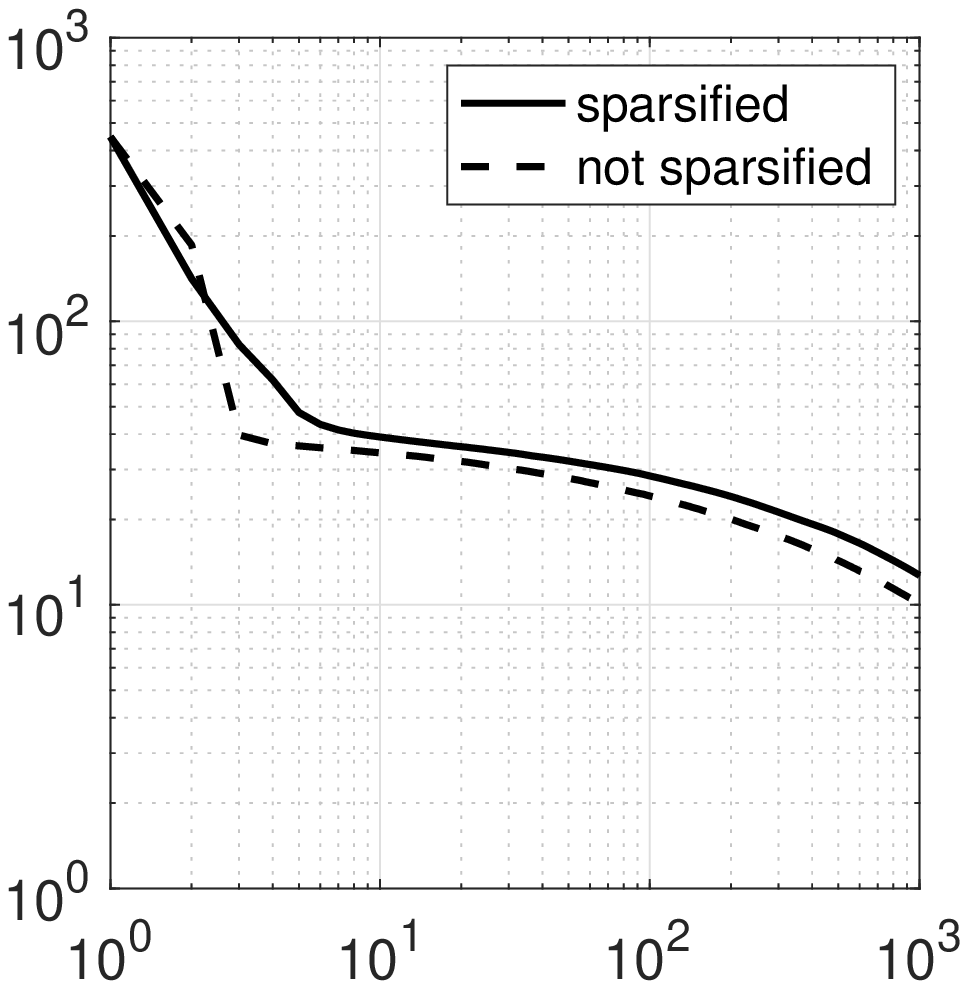}};
	\node[] at (12.9,0) {\includegraphics[width=0.25\linewidth]{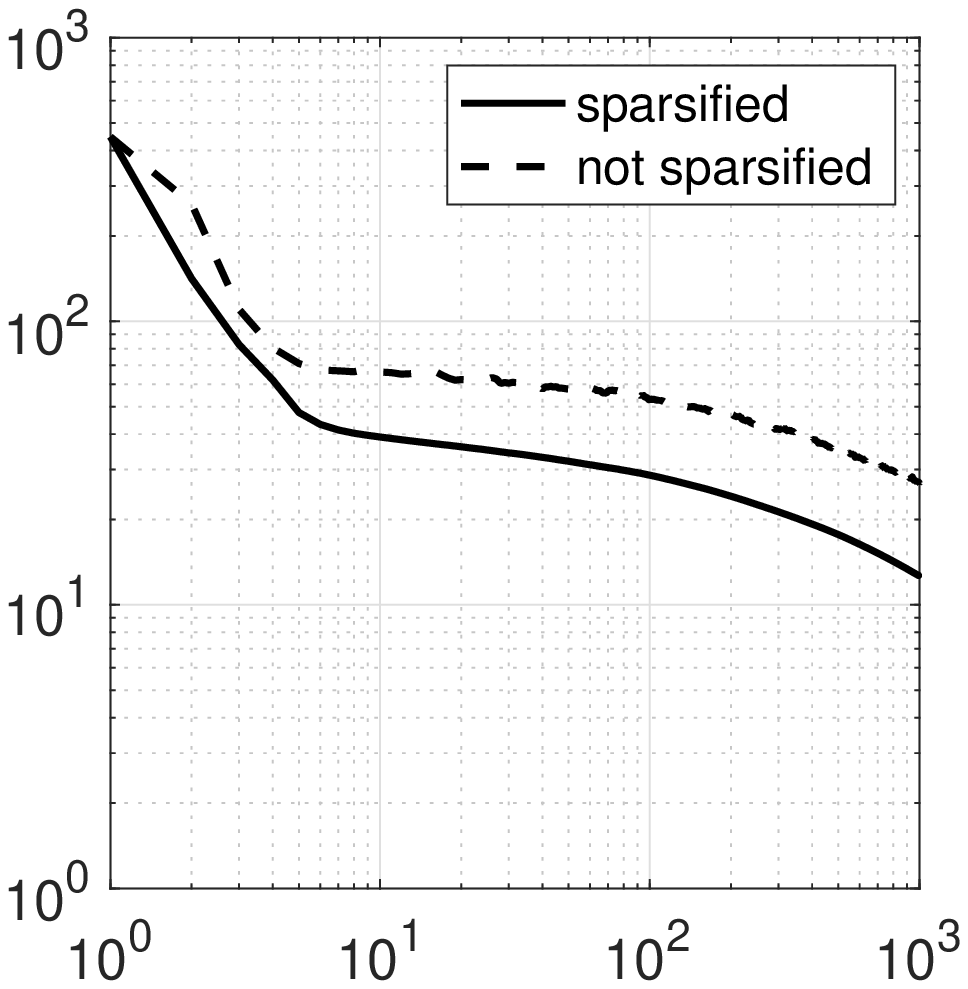}};
	\node[rotate=90] at (+6.4,0) {\small $f_{\mathcal{D}}(\theta[k])/f_{\mathcal{D}}(\theta^*)-1$};
	\node[rotate=90] at (+10.7,0) {\small $f_{\mathcal{D}}(\theta[k])/f_{\mathcal{D}}(\theta^*)-1$};
	\node[] at (8.6,-2.25) {\small iteration number $k$};
	\node[] at (12.9,-2.25) {\small iteration number $k$};
	\node[] at (8.6,2.2) {\small $\epsilon=10,n=10,000$};
	\node[] at (12.9,2.2) {\small $\epsilon=1,n=10,000$};
	\end{tikzpicture}
	\caption{
		\label{fig:simulation}
		Relative loss $f_{\mathcal{D}}(\theta[k])/f_{\mathcal{D}}(\theta^*)-1$ versus the iteration number with gradient sparsification and without gradient sparsification for two choices of privacy budget $\epsilon$ with a dataset of size $n=10,000$.
	}
\end{figure}

\begin{figure}[t]
	\centering
	\begin{tikzpicture}
	\node[] at (0,0) {\includegraphics[width=.45\linewidth]{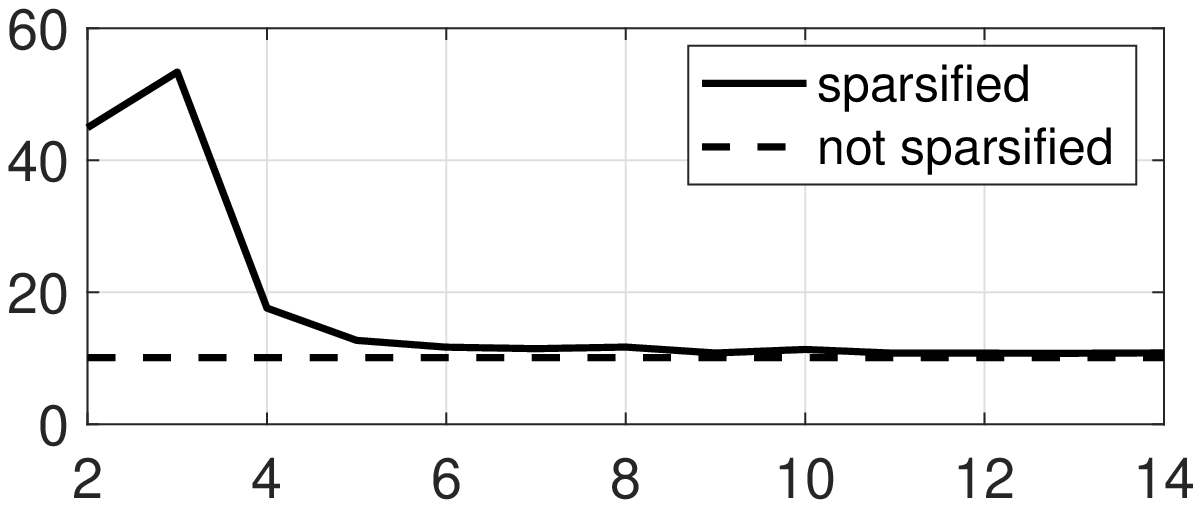}};
	\node[] at (0,-4) {\includegraphics[width=.45\linewidth]{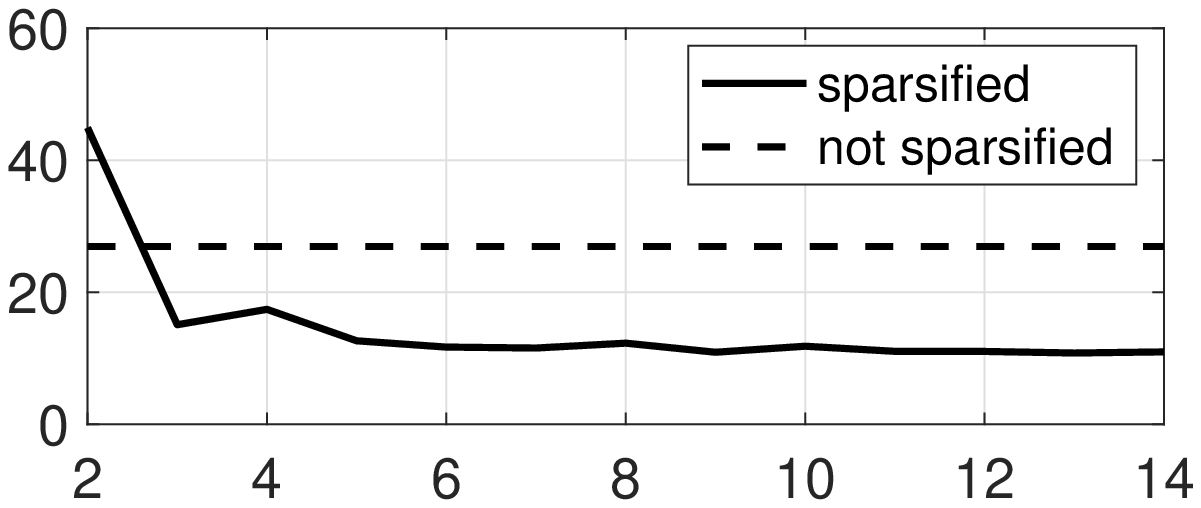}};
	\node[rotate=90] at (-3.8,0) {\small $f_{\mathcal{D}}(\theta[k])/f_{\mathcal{D}}(\theta^*)-1$};
	\node[rotate=90] at (-3.8,-4) {\small $f_{\mathcal{D}}(\theta[k])/f_{\mathcal{D}}(\theta^*)-1$};
	\node[] at (0,-1.8) {\small sparsification level $\kappa$};
	\node[] at (0,-5.8) {\small sparsification level $\kappa$};
	\node[] at (0,1.7) {\small $\epsilon=10,n=10,000$};
	\node[] at (0,-2.3) {\small $\epsilon=1,n=10,000$};
	\end{tikzpicture}
	\caption{
		\label{fig:versus}
		Relative loss $f_{\mathcal{D}}(\theta[k])/f_{\mathcal{D}}(\theta^*)-1$ versus the level of sparsification $\kappa$ with gradient sparsification and without gradient sparsification for two choices of privacy budget $\epsilon$ with a dataset of size $n=10,000$.
	}
\end{figure}
 
If the optimization or learning horizon $T$ is selected as a large enough constant, which is required to train the machine learning model sensibly, we get
\begin{align*}
\mathbb{E}\{f_{\mathcal{D}}(\theta[T])\}-f_{\mathcal{D}}(\theta^*)
 \leq &\mathcal{O}(\log(T)
(\sqrt{2(n_\theta-\kappa)}+\sqrt{C}\kappa^2\log(n_\theta/\kappa)\epsilon^{-1} )).
\end{align*}
When $\epsilon \ll 1$, i.e., small privacy budget regime, the performance degradation caused by the differential privacy noise is bounded by 
$\mathbb{E}\{f_{\mathcal{D}} (\theta[T])\}-f_{\mathcal{D}}(\theta^*)
=\mathcal{O}(\kappa^{2}\log(n_\theta/\kappa)\epsilon^{-1})$,  which implies that compression improves the performance of the privacy-preserving machine learning model. On the other hand if $\epsilon \gg 1$, i.e., high privacy budget regime, $\mathbb{E}\{f_{\mathcal{D}} (\theta[T])\}-f_{\mathcal{D}}(\theta^*)
=\mathcal{O}(\sqrt{n_\theta-\kappa})$, which implies that compression does not improve the performance of the privacy-preserving machine learning model. A more thorough discussion is presented at the end of the introduction after the informal statement of the theorem.

\section{Experimental Results}
In what follows, we demonstrate the results of the paper on a financial dataset. We use the Lending Club dataset with detailed information of around 890,000 successful loan applications on a peer-to-peer lending platform\footnote{\url{https://www.kaggle.com/wendykan/lending-club-loan-data}}. The dataset contains loan attributes, such as loan amount, and borrower information, such as credit rating and age, and interest rates of approved loans. For this experiment, we remove unique identifiers, such as id, and irrelevant attributes, such as web address for the loan listings. We further covert the categorical attributes, such as state of residence, to integers. We employ a linear regression model $y=\mathfrak{M}(x;\theta):=\theta^\top x$ with $\ell(\mathfrak{M}(x;\theta),y):= (y-\mathfrak{M}(x;\theta))^2$ and $\lambda(\theta):=0$ to infer the interest rates of the loans based on the available information.

We use the relative loss of the iterates in Algorithm~\ref{alg:2}, i.e., $f_{\mathcal{D}}(\theta[k])/f_{\mathcal{D}}(\theta^*)-1$, to illustrate the performance of differentially-private machine learning models in our experiment. The relative loss scales  the performance of the model parameter $\theta[k]$ with the performance of the optimal machine learning model $\theta^*$ without privacy constraints, effectively measuring the degradation caused by the privacy-preserving noise. We investigate the relative loss instead of the absolute loss $f_{\mathcal{D}}(\theta[k])$ or the performance degradation $f_{\mathcal{D}}(\theta[k])-f_{\mathcal{D}}(\theta^*)$ to remove the effect of exogenous factors, such as the size of the training dataset, on the performance of the optimal machine learning model. Note that relative loss is greater than or equal to zero and smaller values implies better performance. In our experiments, we have used $\rho_k=10^{-1}/\sqrt{k}$ and $T=1,000$ in Algorithm~\ref{alg:2}.

Figure~\ref{fig:simulation} illustrates the relative loss $f_{\mathcal{D}}(\theta[k])/f_{\mathcal{D}}(\theta^*)-1$ versus the iteration number with gradient sparsification and without gradient sparsification for two choices of privacy budget $\epsilon$ with a dataset of size $n=10,000$. For gradient sparsification, we used $\kappa=5$ and $p=20$.  As proved earlier, gradient sparsification improves the performance of privacy-preserving machine learning models in tight privacy budget regime, i.e., $\epsilon=1$ in our numerical example. This is no longer true for the larger privacy budget.

Figure~\ref{fig:versus} shows the relative loss $f_{\mathcal{D}}(\theta[k])/f_{\mathcal{D}}(\theta^*)-1$ versus the level of sparsification $\kappa$ with gradient sparsification and without gradient sparsification for various choices of privacy budget $\epsilon$ and size of the training dataset $n$. Again, as expected from the theoretical results, in large privacy budget regime, sparsification does not improve the performance while, in small privacy budget regime, sparsification can result in significant improvements.   

\section{Conclusions}
We used gradient sparsification to reduce the adverse effect of differential privacy noise on performance of private machine learning models. We proved that, for small privacy budgets, compression can improve performance of privacy-preserving machine learning models. However, for large privacy budgets, compression does not necessarily improve the performance. Future work can focus on extending these results to non-convex or non-smooth loss functions. 

\section*{Acknowledgments}
I would like to thank Borja Balle Pigem for pointing our a mistake in the proof of Proposition~\ref{proposition:1}.

\bibliography{example_paper}

\begin{thebibliography}{10}

\bibitem{wu2016google}
Y.~Wu, M.~Schuster, Z.~Chen, Q.~V. Le, M.~Norouzi, W.~Macherey, M.~Krikun,
  Y.~Cao, Q.~Gao, K.~Macherey, {\em et~al.}, ``Google's neural machine
  translation system: Bridging the gap between human and machine translation,''
  {\em arXiv preprint arXiv:1609.08144}, 2016.

\bibitem{jiang2017artificial}
F.~Jiang, Y.~Jiang, H.~Zhi, Y.~Dong, H.~Li, S.~Ma, Y.~Wang, Q.~Dong, H.~Shen,
  and Y.~Wang, ``Artificial intelligence in healthcare: past, present and
  future,'' {\em Stroke and Vascular Neurology}, vol.~2, no.~4, pp.~230--243,
  2017.

\bibitem{kourou2015machine}
K.~Kourou, T.~P. Exarchos, K.~P. Exarchos, M.~V. Karamouzis, and D.~I.
  Fotiadis, ``Machine learning applications in cancer prognosis and
  prediction,'' {\em Computational and Structural Biotechnology Journal},
  vol.~13, pp.~8--17, 2015.

\bibitem{dua2016data}
S.~Dua and X.~Du, {\em Data mining and machine learning in cybersecurity}.
\newblock CRC Press, 2016.

\bibitem{bennett2018revisiting}
C.~J. Bennett and C.~D. Raab, ``Revisiting the governance of privacy:
  Contemporary policy instruments in global perspective,'' {\em Regulation \&
  Governance}, 2018.

\bibitem{abadi2016deep}
M.~Abadi, A.~Chu, I.~Goodfellow, H.~B. McMahan, I.~Mironov, K.~Talwar, and
  L.~Zhang, ``Deep learning with differential privacy,'' in {\em Proceedings of
  the 2016 ACM SIGSAC Conference on Computer and Communications Security},
  pp.~308--318, 2016.

\bibitem{wu2020value}
N.~Wu, F.~Farokhi, D.~Smith, and M.~A. Kaafar, ``The value of collaboration in
  convex machine learning with differential privacy,'' in {\em 2020 IEEE
  Symposium on Security and Privacy (SP)}, pp.~304--317, 2020.

\bibitem{21mcsherry2009differentially}
F.~McSherry and I.~Mironov, ``Differentially private recommender systems:
  Building privacy into the netflix prize contenders,'' in {\em Proceedings of
  the 15th ACM SIGKDD international conference on Knowledge discovery and data
  mining}, pp.~627--636, 2009.

\bibitem{zhang2016differential}
Z.~Zhang, B.~I.~P. Rubinstein, and C.~Dimitrakakis, ``On the differential
  privacy of {Bayesian} inference,'' in {\em AAAI Conference on Artificial
  Intelligence}, pp.~2365--2371, 2016.

\bibitem{zhang2017dynamic}
T.~Zhang and Q.~Zhu, ``Dynamic differential privacy for {ADMM}-based
  distributed classification learning,'' {\em IEEE Transactions on Information
  Forensics and Security}, vol.~12, no.~1, pp.~172--187, 2017.

\bibitem{dwork2014algorithmic}
C.~Dwork and A.~Roth, ``The algorithmic foundations of differential privacy,''
  {\em Foundations and Trends in Theoretical Computer Science}, vol.~9,
  no.~3--4, pp.~211--407, 2014.

\bibitem{wei2020federated}
K.~Wei, J.~Li, M.~Ding, C.~Ma, H.~H. Yang, F.~Farokhi, S.~Jin, T.~Q. Quek, and
  H.~V. Poor, ``Federated learning with differential privacy: Algorithms and
  performance analysis,'' {\em IEEE Transactions on Information Forensics and
  Security}, 2020.

\bibitem{bonawitz2017practical}
K.~Bonawitz, V.~Ivanov, B.~Kreuter, A.~Marcedone, H.~B. McMahan, S.~Patel,
  D.~Ramage, A.~Segal, and K.~Seth, ``Practical secure aggregation for
  privacy-preserving machine learning,'' in {\em Proceedings of the 2017 ACM
  SIGSAC Conference on Computer and Communications Security}, pp.~1175--1191,
  2017.

\bibitem{dwork2010boosting}
C.~Dwork, G.~N. Rothblum, and S.~Vadhan, ``Boosting and differential privacy,''
  in {\em 2010 IEEE 51st Annual Symposium on Foundations of Computer Science},
  pp.~51--60, IEEE, 2010.

\bibitem{mironov2017renyi}
I.~Mironov, ``R{\'e}nyi differential privacy,'' in {\em 2017 IEEE 30th Computer
  Security Foundations Symposium (CSF)}, pp.~263--275, IEEE, 2017.

\bibitem{dwork2006our}
C.~Dwork, K.~Kenthapadi, F.~McSherry, I.~Mironov, and M.~Naor, ``Our data,
  ourselves: Privacy via distributed noise generation,'' in {\em Annual
  International Conference on the Theory and Applications of Cryptographic
  Techniques}, pp.~486--503, Springer, 2006.

\bibitem{bassily2014private}
R.~Bassily, A.~Smith, and A.~Thakurta, ``Private empirical risk minimization:
  Efficient algorithms and tight error bounds,'' in {\em 2014 IEEE 55th Annual
  Symposium on Foundations of Computer Science}, pp.~464--473, IEEE, 2014.

\bibitem{asoodeh2020better}
S.~Asoodeh, J.~Liao, F.~P. Calmon, O.~Kosut, and L.~Sankar, ``A better bound
  gives a hundred rounds: Enhanced privacy guarantees via $ f $-divergences,''
  {\em arXiv preprint arXiv:2001.05990}, 2020.

\bibitem{khirirat2018gradient}
S.~Khirirat, M.~Johansson, and D.~Alistarh, ``Gradient compression for
  communication-limited convex optimization,'' in {\em 2018 IEEE Conference on
  Decision and Control (CDC)}, pp.~166--171, IEEE, 2018.

\bibitem{Kerkouche2020Compression}
R.~Kerkouche, G.~Acs, C.~Castelluccia, and P.~Geneves, ``Compression boosts
  differentially private federated learning,'' {\em arXiv preprint
  arXiv:2011.05578}, 2020.

\bibitem{sarwate2013signal}
A.~D. Sarwate and K.~Chaudhuri, ``Signal processing and machine learning with
  differential privacy: Algorithms and challenges for continuous data,'' {\em
  IEEE signal processing magazine}, vol.~30, no.~5, pp.~86--94, 2013.

\bibitem{zhang2012functional}
J.~Zhang, Z.~Zhang, X.~Xiao, Y.~Yang, and M.~Winslett, ``Functional mechanism:
  Regression analysis under differential privacy,'' {\em Proceedings of the
  VLDB Endowment}, vol.~5, no.~11, pp.~1364--1375, 2012.

\bibitem{chaudhuri2009privacy}
K.~Chaudhuri and C.~Monteleoni, ``Privacy-preserving logistic regression,'' in
  {\em Advances in Neural Information Processing Systems}, pp.~289--296, 2009.

\bibitem{shokri2015privacy}
R.~Shokri and V.~Shmatikov, ``Privacy-preserving deep learning,'' in {\em
  Proceedings of the 22nd ACM SIGSAC conference on computer and communications
  security}, pp.~1310--1321, ACM, 2015.

\bibitem{mcmahan2017learning}
H.~B. McMahan, D.~Ramage, K.~Talwar, and L.~Zhang, ``Learning differentially
  private recurrent language models,'' {\em arXiv preprint arXiv:1710.06963},
  2017.

\bibitem{zhang2018privacy}
T.~Zhang, Z.~He, and R.~B. Lee, ``Privacy-preserving machine learning through
  data obfuscation,'' {\em arXiv preprint arXiv:1807.01860}, 2018.

\bibitem{huang2019dp}
Z.~Huang, R.~Hu, Y.~Guo, E.~Chan-Tin, and Y.~Gong, ``{DP-ADMM}: {ADMM}-based
  distributed learning with differential privacy,'' {\em IEEE Transactions on
  Information Forensics and Security}, vol.~15, pp.~1002--1012, 2019.

\bibitem{alistarh2016qsgd}
D.~Alistarh, J.~Li, R.~Tomioka, and M.~Vojnovic, ``{QSGD}: Randomized
  quantization for communication-optimal stochastic gradient descent,'' {\em
  arXiv preprint arXiv:1610.02132}, 2016.

\bibitem{wen2017terngrad}
W.~Wen, C.~Xu, F.~Yan, C.~Wu, Y.~Wang, Y.~Chen, and H.~Li, ``Terngrad: Ternary
  gradients to reduce communication in distributed deep learning,'' in {\em
  Advances in neural information processing systems}, pp.~1509--1519, 2017.

\bibitem{wang2018atomo}
H.~Wang, S.~Sievert, S.~Liu, Z.~Charles, D.~Papailiopoulos, and S.~Wright,
  ``Atomo: Communication-efficient learning via atomic sparsification,'' in
  {\em Advances in Neural Information Processing Systems}, pp.~9850--9861,
  2018.

\bibitem{konevcny2016federated}
J.~Kone{\v{c}}n{\`y}, H.~B. McMahan, F.~X. Yu, P.~Richt{\'a}rik, A.~T. Suresh,
  and D.~Bacon, ``Federated learning: Strategies for improving communication
  efficiency,'' {\em arXiv preprint arXiv:1610.05492}, 2016.

\bibitem{bernstein2018signsgd}
J.~Bernstein, Y.-X. Wang, K.~Azizzadenesheli, and A.~Anandkumar, ``{signSGD}:
  Compressed optimisation for non-convex problems,'' {\em arXiv preprint
  arXiv:1802.04434}, 2018.

\bibitem{lin2017deep}
Y.~Lin, S.~Han, H.~Mao, Y.~Wang, and W.~J. Dally, ``Deep gradient compression:
  Reducing the communication bandwidth for distributed training,'' {\em arXiv
  preprint arXiv:1712.01887}, 2017.

\bibitem{seide20141}
F.~Seide, H.~Fu, J.~Droppo, G.~Li, and D.~Yu, ``1-bit stochastic gradient
  descent and its application to data-parallel distributed training of speech
  dnns,'' in {\em Fifteenth Annual Conference of the International Speech
  Communication Association}, 2014.

\bibitem{bellet2015distributed}
A.~Bellet, Y.~Liang, A.~B. Garakani, M.-F. Balcan, and F.~Sha, ``A distributed
  frank-wolfe algorithm for communication-efficient sparse learning,'' in {\em
  Proceedings of the 2015 SIAM International Conference on Data Mining},
  pp.~478--486, SIAM, 2015.

\bibitem{magnusson2017convergence}
S.~Magn{\'u}sson, C.~Enyioha, N.~Li, C.~Fischione, and V.~Tarokh, ``Convergence
  of limited communication gradient methods,'' {\em IEEE Transactions on
  Automatic Control}, vol.~63, no.~5, pp.~1356--1371, 2017.

\bibitem{ahmed1974discrete}
N.~Ahmed, T.~Natarajan, and K.~R. Rao, ``Discrete cosine transform,'' {\em IEEE
  transactions on Computers}, vol.~100, no.~1, pp.~90--93, 1974.

\bibitem{strubell2019energy}
E.~Strubell, A.~Ganesh, and A.~McCallum, ``Energy and policy considerations for
  deep learning in {NLP},'' in {\em Proceedings of the 57th Annual Meeting of
  the Association for Computational Linguistics}, pp.~3645--3650, 2019.

\bibitem{li2011compressive}
Y.~D. Li, Z.~Zhang, M.~Winslett, and Y.~Yang, ``Compressive mechanism:
  Utilizing sparse representation in differential privacy,'' in {\em
  Proceedings of the 10th annual ACM workshop on Privacy in the electronic
  society}, pp.~177--182, 2011.

\bibitem{donoho2006compressed}
D.~L. Donoho, ``Compressed sensing,'' {\em IEEE Transactions on information
  theory}, vol.~52, no.~4, pp.~1289--1306, 2006.

\bibitem{candes2005decoding}
E.~J. Candes and T.~Tao, ``Decoding by linear programming,'' {\em IEEE
  transactions on information theory}, vol.~51, no.~12, pp.~4203--4215, 2005.

\bibitem{candes2006robust}
E.~J. Cand{\`e}s, J.~Romberg, and T.~Tao, ``Robust uncertainty principles:
  Exact signal reconstruction from highly incomplete frequency information,''
  {\em IEEE Transactions on information theory}, vol.~52, no.~2, pp.~489--509,
  2006.

\bibitem{rudin1976principles}
W.~Rudin, {\em Principles of Mathematical Analysis}.
\newblock International Series in Pure and Applied Mathematics, McGraw-Hill,
  1976.

\bibitem{needell2009cosamp}
D.~Needell and J.~A. Tropp, ``{CoSaMP}: Iterative signal recovery from
  incomplete and inaccurate samples,'' {\em Applied and Computational Harmonic
  Analysis}, vol.~26, no.~3, pp.~301--321, 2009.

\bibitem{shamir2013stochastic}
O.~Shamir and T.~Zhang, ``Stochastic gradient descent for non-smooth
  optimization: Convergence results and optimal averaging schemes,'' in {\em
  International Conference on Machine Learning}, pp.~71--79, 2013.

\bibitem{lange2010applied}
K.~Lange, {\em Applied Probability}.
\newblock Springer Texts in Statistics, Springer New York, 2010.

\end{thebibliography}
\bibliographystyle{ieeetr}

\end{document}